\documentclass[twocolumn]{ustc_conference}
\usepackage{amssymb}
\usepackage[utf8]{inputenc}
\usepackage{booktabs}
\usepackage{multirow}
\usepackage{xcolor}
\usepackage{colortbl}
\usepackage{adjustbox}
\usepackage{wrapfig}
\usepackage{needspace}
\usepackage[edges]{forest}
\usepackage{tikz}
\usepackage{amsfonts}
\usepackage[toc,page,header]{appendix}
\usepackage{bbm}
\usepackage{enumitem}
\usepackage{amsmath}
\usepackage{floatflt}
\usepackage{bm}
\usepackage{graphicx}
\usepackage{colortbl} 
\usepackage{enumitem} 
\usepackage{wrapfig}
\usepackage{listings} 
\usepackage{tcolorbox}
\usepackage{algorithm}
\usepackage{algorithmic}  
\usepackage{multirow} 
\usepackage{pifont} 
\usepackage{newfloat}
\usepackage{pifont}
\usepackage{arydshln}   
\usepackage[dvipsnames,table]{xcolor}
\usepackage{mathtools}
\newtheorem{theorem}{Theorem}[section]

\newtheorem{proof}[theorem]{Proof}

\usepackage[textsize=tiny]{todonotes}
\usepackage[most]{tcolorbox}
\usepackage{makecell}

\definecolor{frameorange}{RGB}{218, 114, 27}
\definecolor{bgorange}{RGB}{253, 243, 235}

\definecolor{frameblue}{RGB}{0, 85, 150}
\definecolor{bgblue}{RGB}{235, 245, 255}

\definecolor{framegreen}{RGB}{34, 139, 34}
\definecolor{bggreen}{RGB}{240, 253, 240}

\newtcolorbox{StrategyBox}[3][frameorange]{
  enhanced,
  float*,
  width=\textwidth,
  title={#3},
  colframe=#1,
  colback=#2,
  colbacktitle=#1,
  coltitle=white,
  fonttitle=\bfseries\large,
  fontupper=\rmfamily,
  arc=1.5mm,
  boxrule=1.2pt,
  top=3mm, bottom=3mm, left=3mm, right=3mm,
  toptitle=0.5mm, bottomtitle=0.5mm,
  before upper={\setlength{\parindent}{1.5em}}
}

\newtcolorbox{BreakableStrategyBox}[3][frameorange]{
  enhanced,
  breakable,
  width=\linewidth,
  title={#3},
  colframe=#1,
  colback=#2,
  colbacktitle=#1,
  coltitle=white,
  fonttitle=\bfseries\large,
  fontupper=\rmfamily,
  arc=1.5mm,
  boxrule=1.2pt,
  top=3mm, bottom=3mm, left=3mm, right=3mm,
  toptitle=0.5mm, bottomtitle=0.5mm,
  before upper={\setlength{\parindent}{1.5em}}
}

\renewcommand{\paragraph}[1]{\vspace{0.1em}\noindent\textbf{#1}}

\title{DiDPO: Diff-in-Diff Policy Optimization for \\ Coding Agent Training}

\author{%
\parbox{\textwidth}{\centering
Xucong Wang\frontsup{1},  Zhe Zhao\frontsup{1,2}, Liheng Yu\frontsup{1}, Di Wu\frontsup{1}, \\ Xiaofeng Cao\frontsup{4}, Pengkun Wang\frontsup{1,3\corrauthor} 
}}

\affiliation{%
\parbox{\textwidth}{\centering
\affilsup{1}University of Science and Technology of China  (USTC)  \quad \affilsup{2}Stanford University   \\
\affilsup{3}Suzhou Institute for Advanced Research, USTC \quad 
\affilsup{4}Tongji University
}}

\contribution[\mathsection]{Corresponding Author}

\abstract{
Reinforcement learning with Verifiable Reward (RLVR) has emerged as a powerful paradigm for training coding agents, where the execution feedback from compilation and tests provides objective verification. However, unlike agent tasks, coding agents face a unique and finer-grained credit assignment challenge: at each step, coding actions simultaneously pack varying changes into different regions of a code version, which makes the contribution of independent change indistinguishable. Existing RLVR methods mostly leverage the outcome reward or step-level reward, which fails to dive into a code diff and makes unique properties of coding actions invisible to training. In this paper, we propose Diff-in-Diff Policy Optimization (DiDPO), a critic-free RL method that constructs fine-grained credit units directly from the structure of code diffs. DiDPO organizes multi-turn coding interactions into multiple thought--action steps and discovers code diffs across sampled trajectories. It then selects anchors by aggregating highly similar sub-diffs split from each whole diff by our ``groupability score'', which provides the splitting schema that optimally balances the semantic scope of anchors and the group mass they may form. Finally these anchors form advantage groups and project the diff-level advantage back to individual response tokens. Experiments on long-horizon coding and reasoning benchmarks show that DiDPO significantly outperforms strong agentic RL baselines. On Qwen2.5-7B-Coder, DiDPO exceeds comparable methods by over 10\% and narrows the gap with far larger models, offering a principled framework for fine-grained credit assignment in coding agent training. We also open-source  verl-code, an agentic rl codebase that supports various RL methods and coding benchmarks.

}

\checkdata[Github]{\url{https://github.com/xuc865/verl-code}}

\begin{document}

\maketitle


\section{Introduction}
LLM agents~\cite{yao2023react,shinn2023reflexion,chhikara2025mem0,fang2026memp,lee2026meta,yao2022webshop} extend language models from response generation to goal-directed interaction by combining reasoning, tools, and environmental feedback. ReAct~\cite{yao2023react} established a widely used reasoning--action loop, while related self-improvement methods showed how environmental feedback can refine subsequent behavior~\cite{shinn2023reflexion}. For coding, advances in code representation and generation provided the foundation for understanding program structure and producing executable code~\cite{fried2023incoder,li2023starcoder,hui2024qwen25coder}.  Some coding agents, like SWE-agent~\cite{yang2024swe}, OpenHands~\cite{wang2024openhands}, CodeAct~\cite{wang2024codeact} also place the model inside an interactive software workspace, allowing it to inspect repositories, modify files, execute code, and respond to test feedback. Recent work also explores stateful agent harnesses and streamlined repository-repair workflows~\cite{xia2024agentless}. Alongside this methodological progression, coding benchmarks have expanded from executable synthesis to competition-level programming and real repository repair tasks~\cite{chen2021evaluating,jimenez2024swebench,ma2026skillclaw}.

\begin{figure}[t] 
    \centering
    \includegraphics[width=1\linewidth]{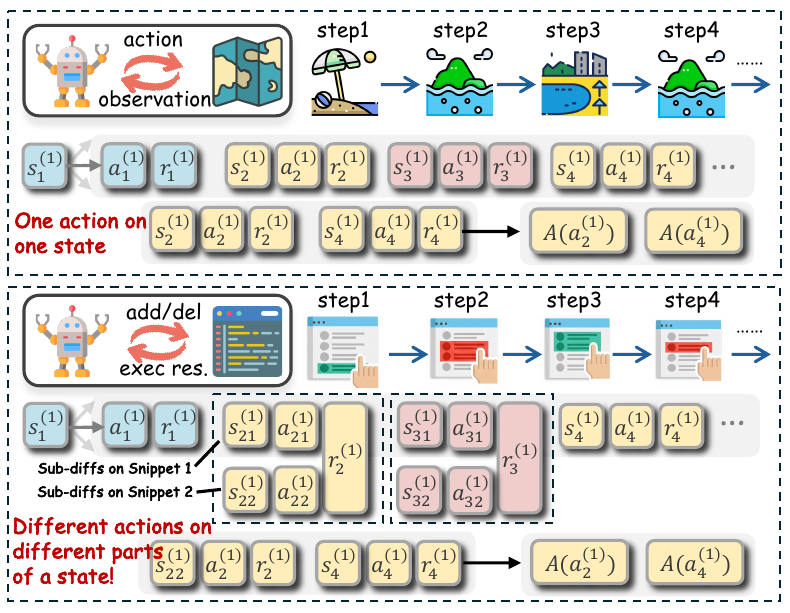} 
    \vspace{-0.7cm}
    \caption{\textbf{Upper:} For many agent tasks, each agent action is treated as a single decision unit over a state. \textbf{Lower:} For coding tasks, many coding actions may be applied to different parts of a state, while producing multiple functional sub-diffs.} 
    \vspace{-0.5cm}
    \label{fig:concept}
\end{figure}

These executable environments create a natural role for Reinforcement Learning with Verifiable Rewards (RLVR). A software issue may admit many valid interaction trajectories, making exhaustive supervision expensive, while compilation and tests provide objective feedback on the resulting workspace. CodeRL~\cite{le2022coderl} first used execution outcomes to optimize program generation, and SWE-RL~\cite{wei2025swerl} later extended reinforcement learning to open software evolution. ExecVerify~\cite{tang2026execverify} further shows the value of stepwise verifiable signals for code execution reasoning. More broadly, GRPO~\cite{guo2025deepseekr1} estimates relative advantages from multiple rollouts without a learned critic, while DAPO~\cite{yu2025dapo} and GSPO~\cite{zheng2025gspo} improve the stability and efficiency of group-based policy optimization. For long-horizon agents, GiGPO~\cite{feng2025gigpo} groups actions that revisit the same environment state. Together, these methods make RLVR increasingly effective for training interactive agents.

However, as shown in Figure \ref{fig:concept}, the common settings in many agent tasks~\cite{shridhar2021alfworld,wang2023cogagent,wang2026role} assume one atomic action is applied to one atomic state, which does not transfer cleanly to coding. To be concrete, \ding{182} coding actions are bundled with snippets which expand with task difficulty and project scale. This means  harder tasks require longer edits and admit interleaved actions on interleaved regions. \ding{183} At each step, each diff (\textbf{i.e., code changes between two adjacent versions}) may contain several sub-diffs (\textbf{i.e., part of the code diffs}) with different functionalities. Treating the whole diff as one therefore obscures the semantic unit that produced an improvement. \ding{184} code is also governed by strict syntax, execution, and pass/fail criteria; any small malformed edit may influence a long trajectory thus merits special focus.  These challenges naturally lead to the following question:

\begin{tcolorbox}[
  colback=gray!10,     
  colframe=black,     
  boxrule=1pt,         
  arc=2mm,            
  left=4pt,right=4pt,top=4pt,bottom=4pt,  
]
 \textbf{How can we construct dynamic, finer-grained credit units for code diffs to better apply agentic RL to coding tasks?}
\end{tcolorbox}

Motivated by these observations, we introduce Diff-in-Diff Policy Optimization (DiDPO), a tailored agentic RL algorithm that uses the hierarchy in each code diff to construct fine-grained credit. DiDPO retains a trajectory-level group-relative advantage but further looks inside each code-producing action. Across rollouts of the same task, it identifies highly similar sub-diffs between diffs and uses these sub-diffs as anchors that induce sub-diff boundaries. The resulting sub-diffs capture recurring functional regions from diffs of different rollouts. DiDPO then groups aligned sub-diffs and computes a local group-relative advantage named diff-level advantage. Since over-large and over-small anchors affect the quality of grouping, we design a groupability score to select anchors that optimally balance semantic scope they could represent and the mass of the group they could aggregate. The diff-level advantage is combined with the trajectory-level advantage and projected back to its response tokens, requiring neither an additional critic nor extra environment rollouts. Extensive experiments across long-horizon coding benchmarks demonstrate the superiority of DiDPO over existing coding and agentic RL baselines.  Our contributions are threefold:
\begin{itemize}
  \item We identify three structural properties that distinguish coding from existing agentic RL settings and formulate the resulting need for hierarchical credit assignment within code-producing actions.
  \item We propose DiDPO, the first agentic RL algorithm tailored to the functional hierarchy of code, which dynamically aligns diffs, constructs sub-diff groups, and assigns group-relative advantages to the corresponding response tokens without extra rollouts.
  \item We conduct extensive experiments across diverse long-horizon coding benchmarks, where DiDPO consistently outperforms strong policy-optimization baselines.
\end{itemize}

\section{Related Work}
\subsection{Agentic Reinforcement Learning}
Reinforcement Learning with Verifiable Reward~\cite{azar2024ipo} extends policy optimization to dynamic, open-ended tasks whose outcomes can be checked automatically. Early RLHF~\cite{christiano2017deep,ouyang2022training} methods learned reward models from human comparisons and optimized policies against those models. Direct preference methods later converted pairwise preferences into supervised objectives, with DPO providing a representative formulation~\cite{rafailov2023direct}.  More recent methods such as GRPO~\cite{guo2025deepseekr1}, GSPO~\cite{zheng2025gspo}, and DAPO~\cite{yu2025dapo} estimate advantages from multiple rollouts of the same prompt. Subsequent studies examine how sampling, clipping~\cite{chen2025minimax} and normalization further affects the training stability.

In complementary to the outcome rewards, Process-supervision methods provide finer feedback by assigning signals to intermediate reasoning steps~\cite{cobbe2021training,lightman2023lets,wang2023lightweight,wang2025multi}. Interactive search-augmented systems combine environment feedback with search or value-guided trajectory improvement. RAGEN~\cite{wang2025ragen} studies training stability when trajectories contain many dependent decisions~\cite{chen2025reinforcement}. State-based methods approach credit assignment through local comparisons. GiGPO constructs groups when trajectories revisit the same state~\cite{feng2025gigpo}, whereas GAGPO derives a TD/GAE-style advantage from estimated state values~\cite{zhu2026gagpo}. In contrast, DiDPO treats code changes inside diffs as the comparison units. This reflects the internal structure of coding actions beyond environment observations.

\subsection{Code Generation and Coding Agents}
Executable feedback made functional correctness directly optimizable~\cite{chen2021evaluating,apps2021hendrycks} beyond surface similarity rubrics. Building on this signal, CodeRL used unit tests to train program generators, while CodeT~\cite{chen2022codet} used generated tests and learned verifiers to improve candidate selection. Self-debugging~\cite{chen2024teaching} further incorporated execution failures into iterative revision. As executable evaluation expanded to data-science and class-level generation~\cite{lai2023ds1000}, repository-level tasks extended the same feedback loop across a sequence of workspace interactions~\cite{jimenez2024swebench}. SWE-agent realizes this process through repeated editing and testing~\cite{yang2024swe}, with format disciplines from OpenHands~\cite{wang2024openhands} and CodeAct~\cite{wang2024codeact}. Although this line of work improves generation under executable feedback, the final program  remains the unit of evaluation. As a result, the contribution of internal changes remains unresolved. 

In complement to the above, some methods target repair-generation and unit editing~\cite{just2014defects4j}.  GenProg~\cite{leGoues2012genprog} searches over test-guided patch candidates, while Prophet~\cite{long2016prophet} uses human-designed or learned priors to rank candidate repairs.   CURE~\cite{jiang2021cure} guides repair generation with code context, and Recoder~\cite{zhu2021recoder} incorporates syntax-guided decoding and neural patch generation. These methods, however, operate on isolated bug-fix instances where the edit scope is small and self-contained, whereas coding agents must distribute credit across multi-step trajectories that span multiple files and edit locations. In contrast, DiDPO is the first coding RL method to connect executable outcome feedback with localized credit by aligning recurring sub-diffs across coding agent rollouts.

\section{Problem Setting}
\subsection{Coding as Markov Decision Process}
We consider a typical multi-turn coding process interacting with a software environment. A task instance $x$ is provided to the agent. At the step $t$, the generation follows the CodeAct~\cite{wang2024codeact} paradigm, where the agent with the old policy $\pi_{\theta_{\mathrm{old}}}$ outputs $N$ Thought-Action-Observation cycles to the environments:
\begin{equation}
    \bm{\tau}^{(i)}=\{(\bm{s}^{(i)}_t,\bm{a}^{(i)}_t,r^{(i)}_t)|1\le t\le T_i\}
\end{equation}
Where $T_i$ is the number of steps in the trajectory $\bm{\tau}^{(i)}$, and the agent emits a textual action $\bm{a}^{(i)}_t$ based on the code state $\bm{s}^{(i)}_t$, and $\bm{a}^{(i)}_t$ can be \texttt{add}, \texttt{del} or \texttt{none}. \textbf{In contrast with many agentic tasks where each action is applied to one state, $\bm{s}^{(i)}_t$ here is divisible, and $\bm{a}_t$ here can be applied to a sub-set of $\bm{s}^{(i)}_t$.} For most coding tasks, the process reward $r$ would be replaced by the outcome reward $R^{(i)}$ for the $i$-th trajectory.

\begin{figure}[t] 
    \centering
    \includegraphics[width=1\linewidth]{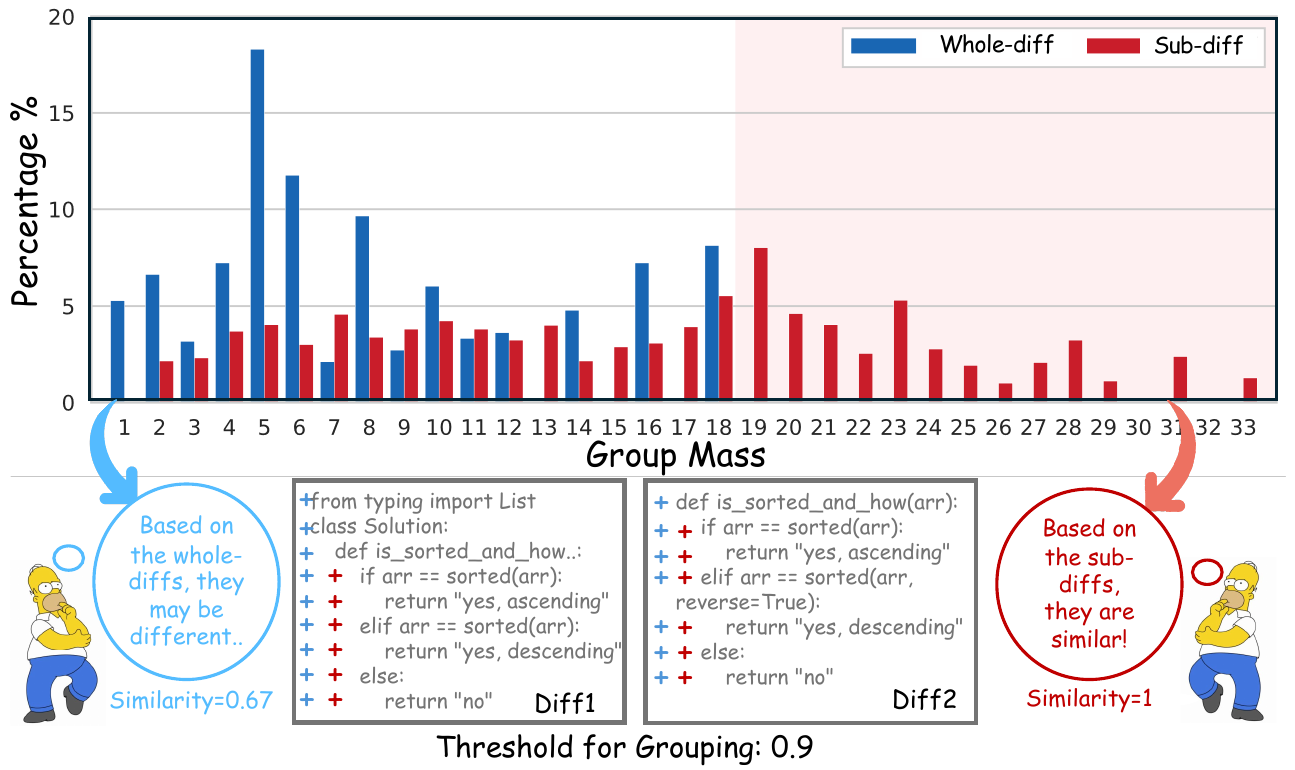} 
    \vspace{-0.7cm}
    \caption{\textbf{Upper:} Grouping states based on whole diffs tends to produce more small-mass \textbf{(mass: number of elements in that group)} groups; grouping based on sub-diffs results in more large-mass groups. This is because the former one neglects similarities in sub-diffs \textbf{(Lower)}.} 
    \vspace{-0.5cm}
    \label{fig:show}
\end{figure} 

\subsection{Trajectory / State-Level Advantage}
 For a group of rollouts sampled from the task prompt $x$, the trajectory-level advantage~\cite{shao2024deepseekmath} is written as:
\begin{equation}
A^E(\bm{\tau}^{(i)}) = [R^{(i)} - \frac{1}{N}\sum_{j=1}^{N} R^{(j)}]/{\rm std}(\{R^{(j)}\}_{j=1}^N),
\end{equation}
While $A^E(\bm{\tau}^{(i)})$ still treats the response as a single carrier of credit, GiGPO~\cite{feng2025gigpo} proposes that states naturally recur across episodes and results in redundancy; based on which, it constructs a state-level advantage term by grouping similar actions with similar states as units for credit assignment. Formally, GiGPO identifies unique states $\bm{s}$ from all trajectories of the same prompt as anchors:
\begin{equation}
    G(\tilde{\bm{s}})\!=\!\{(\bm{a}^{(i)}_t,r^{(i)}_t)|\bm{s}^{(i)}_t=\tilde{\bm{s}},1\!\le\! i \!\le\! N,1\!\le\!t \!\le\!T_i \}
\end{equation}
Let $R^{(i)}_{t}\!=\!\sum_{k=t}^{T_i}(\gamma)^{T_i-k}\!\cdot  r_k^{(i)}$ as the discounted reward at step $t$. The state-level advantage $A^S(\bm{\tau}^{(i)})$ is then calculated within actions started from each unique $\bm{s}$:
\begin{equation}
    A_i^S(\bm{a}^{(i)}_t)\!=\!\frac{R^{(i)}_{t}\!\!-\!\!{\rm avg}(\{R_{t}^{(j)}|(\bm{a}_t^{(j)},R_{t}^{(j)})\!\in\! G(\tilde{\bm{s}})\})}{F_{norm}(\{R_t^{(j)}|(\bm{a}_t^{(j)},R_t^{(j)})\!\in\! G(\tilde{\bm{s}})\})}
\end{equation}
$A_i^S$ is then combined with $A^E(\bm{\tau}^{(i)})$ to provide state-level signals for intermediate actions in long-horizon agent tasks.

\section{Methodology}  
The primary challenge in long-horizon coding tasks lies in identifying the contribution of each action step. To address this, we introduce DiDPO, which  derives dynamic advantage groups based on code \textbf{diffs} for credit assignment.

\begin{figure*}[t] 
    \centering
    \includegraphics[width=1\linewidth]{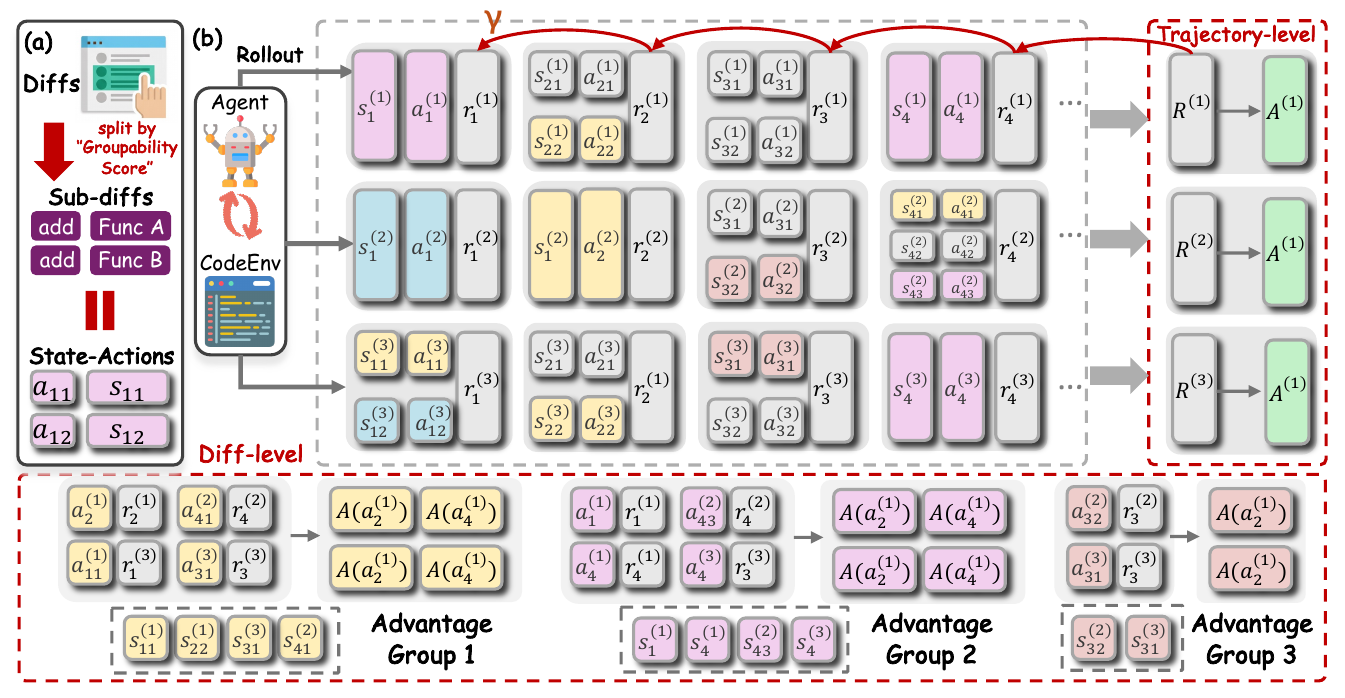} 
    \vspace{-0.7cm}
    \caption{\textbf{(a)}: Code diffs can be split into sub-diffs according to our proposed Groupability Score (GS). \textbf{(b):} The agent interacts with the code execution environments and generate several code editing trajectories, and state-actions with the same color means the same sub-diffs. DiDPO computes both trajectory-level advantage $A^E$ and diff-level advantage $A^D$; $A^E$ is calculated from outcome rewards, while $A^D$ is calculated among all aggregated sub-diffs. } 
    \vspace{-0.5cm}
    \label{fig:main}
\end{figure*}

\subsection{Diffs are Divisible}
For step $k$ of trajectory $\bm{\tau}^{(i)}$, let $\mathcal{D}_{i,k}$ denote the set of diffs distributed at different locations. DiDPO first aggregate diffs from all trajectories and steps together, resulting $\mathcal{S}$: 
\begin{equation}
\mathcal{S}= \{\{\mathcal{D}_{i,k};1\le k\le T_i\};1\le i \le N \}
\label{eq:snippet-set}
\end{equation}
To better track these diffs for differentiated processing, for diff each  $\bm{s}\in\mathcal{S}$, we attach the following tuple of metadata to this diff:
\begin{equation}
   {\rm Metadata}(\bm{s}) = (u(\bm{s}),v(\bm{s}),w(\bm{s}),q(\bm{s})),
\end{equation}
where $u(\bm{s})$ denotes the \textbf{normalized} (i.e., with annotations and blanks removed) text of diff $\bm{s}$, $v(\bm{s})$ denotes its response-token span, $w(\bm{s})\in\{\texttt{add}, \texttt{del},  \texttt{none}\}$ represents the actions applied to the diff, and $q(\bm{s})$ the task instance that produces $\bm{s}$. The normalized diff size is denoted  as $|u(\bm{s})|$.

Unlike cases  where states can be treated as atomic units, when code diffs are placed together, the severe asymmetry prevents most diffs from being effectively grouped. As shown in Figure~\ref{fig:show}, on the APPS dataset, grouping states based on the whole diffs results in groups that are mostly small-sized, whereas decomposing diffs into finer sub-diffs leads to a greater number of groups with larger sizes. A typical example to explain this is: Diff 1 and Diff 2 partially overlap but also partially diverge, causing the overall similarity score to fall below the similarity threshold. This motivates us that further division of diffs helps uncover a richer set of mergeable states.

\subsection{Dynamic Sub-Diff Anchors}
The above findings prompted us to further decompose diffs, yet this raises another question: \textbf{how should we choose the division criteria for different diffs, to ensure that the resulting sub-diffs are the most abundant and semantically meaningful?} If we uniformly split all diffs  line by line and rely on them to form anchors, the anchors would inevitably be cluttered with countless meaningless blanks and snippets. DiDPO addresses this by striking a balance between the size of the anchors and the group mass \textbf{(number of elements in that group)} of anchors.

Recall that $|u(\bm{s})|$ is the normalized diff size of $\bm{s}\in\mathcal{S}$. We enumerate its contiguous sub-diffs at multiple scales as the following:
\begin{equation}
\mathcal{I}(\bm{s})=\big\{{\rm Seg}(\bm{s},[p,q)):{}0\le p<q\le |u(\bm{s})|\big\},
\end{equation}
Where ${\rm Seg}(\bm{s},[p,q))$ slices the diff $\bm{s}$ from $p$ to $q$. For two sets of sub-diffs $\mathcal{I}(\bm{s})$ and $\mathcal{I}(\bm{s}')$, which may be produced by different rollouts or steps, we separately calculate the similarity matrix as
\begin{equation}
\begin{aligned}
\mathcal{M}_{\bm{s},\bm{s}'}=&\big\{(I,J,\sigma_{I,J})|
I\in\mathcal{I}(\bm{s}),\ J\in\mathcal{I}(\bm{s}');\\
w(\bm{s})=\, & \,w(\bm{s}');\sigma_{I,J}=\mathrm{sim}\!\left(u(\bm{s}_I),u(\bm{s}'_J)\right)
\ge\eta\big\}.
\end{aligned}
\label{eq:anchor-match}
\end{equation}
The ${\rm sim}$ combines token-level lexical  matching and embedding  similarity to compare the source text. Notably matching is restricted to the same edit type, i.e, $w(\bm{s})\!=\!w(\bm{s}')$.

All matched sub-diffs can be aggregated into an anchor $\bm{c}\in\mathcal{C}$, which represents one code-changing pattern. We use $\mathcal{O}(\bm{c})$ to denote the occurrences supporting this anchor. Accordingly, $\bar L(\bm{c})$ measures its average size and $n(\bm{c})$ counts the occurrences. An optimal anchor should both \textbf{be large enough to carry a valid functional meaning} and also \textbf{gather a group with enough group mass}. We define the ``Groupability Score (GS)'' to formalize this trade-off:
\begin{equation} 
\mathrm{GS}(\bm{c})=
\big(1-e^{-\bar L(\bm{c})}\big)
\big(1-e^{-(n(\bm{c})-1)}\big),
\label{eq:anchor-groupability}
\end{equation} 
While Equation~\ref{eq:anchor-groupability} shows how suitable each occurrence is as an anchor, two high-scoring anchors may be built from many of the same sub-diffs. Thus, selecting anchors by this score alone would therefore introduce redundant groups. 

With this in mind,  DiDPO searches for different anchors which jointly maximizes GS across sub-diffs:
\begin{equation} 
\mathcal{C}^{\star}
 \! = \!{\rm argmax}_{\substack{|\mathcal{C}|\le K}}  \!\!\!\!\sum_{s\in\mathcal{I}(\mathcal{S})}\!\!
\big\{\!\max_{\bm{c}\in\mathcal{C}}\mathbf{1}\!\left[s\!\!\in\!\!\mathcal{O}(\bm{c})\right]\mathrm{GS}(\bm{c})\big\}
\label{eq:anchor-selection}
\end{equation}
The inner maximum allows each candidate sub-diff to contribute through only one selected anchor, so overlapping anchors cannot repeatedly claim the same occurrence. Equation~\ref{eq:anchor-selection} is a cardinality-constrained facility-location form of submodular maximization~\cite{nemhauser1978analysis}. We solve it greedily by adding the anchor with the largest marginal increase until reaching no gains.  The detailed solution process is in  \textbf{\underline{Appendix ~\ref{appx:b}}}.

\subsection{Diff Advantage Calculation}
The selected anchors determine both the decomposition and the grouping. According to the anchors $\mathcal{C}^{\star}$, larger diffs are cut into the occurrences of anchors and other sub-diffs. Actions belonging to the same anchor then form an advantage group:
\begin{equation}
{G}(\mathcal{C}^{\star})\!=\!\{(\bm{a}^{(i)}_{t,m}  ,R^{(i)}_t)|\exists  \bm{c}\in\mathcal{C}^{\star}:\bm{s}^{(i)}_{t,m}\in\mathcal{O}(\bm{c}) \}.
\label{eq:anchor-assignment}
\end{equation}
Where $\!1\!\le\!\! i \!\le\! N,\!1\le\!\! t \!\le\!  T_i$.  $\bm{a}^{(i)}_{t,m}$ is the action over the $m$-th sub-diff.  $R^{(i)}_{t}$ is the discounted reward at step $t$. The groups ${G}(\mathcal{C}^{\star})$ therefore contain sub-diffs that share a code-changing pattern across rollouts. DiDPO then computes the diff-level advantage $A^{D}(\bm{a}^{(i)}_{t,m})$ for each $(\bm{a}^{(i)}_{t,m}\,,R^{(i)}_t)\in{G}(\mathcal{C}^{\star})$ as:
\begin{equation}
A^{D}\!(\bm{a}^{(i)}_{t,m})\!\!=\!\!
\frac{{R}_t^{(i)}\!\!-\!\!\mathrm{avg} (\{R_{t}^{(j)}|(\bm{a}_{t,m}^{(j)},R_{t}^{(j)})\!\in\!G(\mathcal{C}^{\star})\})}
{F_{norm}(\{R_{t}^{(j)}|(\bm{a}_{t,m}^{(j)},R_{t}^{(j)})\!\in\! G(\mathcal{C}^{\star})\})}. 
\label{eq:subdiff-adv}
\end{equation}
 The trajectory-level advantage $A^{E}$ supervises the entire coding, whereas the diff-level advantage $A^{D}$ distinguishes code diffs that lead to different outcomes within the same task.

\subsection{Diff-In-Diff Policy Optimization}
With a slight abuse of notation, let $\bm{a}^{(i)}_{l}$ be the action containing response $l$ from rollout $i$. Since each diff may contain multiple actions that belong to different advantage groups, we write the final advantage as a token-level format, with a coefficient $\lambda$:
\begin{equation}
\hat A_{i,l}=A^E(\bm{\tau}^{(i)})+\lambda\cdot A^{D}(\bm{a}^{(i)}_{l}).
\label{eq:DiDPO-adv}
\end{equation}
Finally, let $\rho_{i,l}(\theta)\!=\!\pi_{\theta}(y_{i,l}\!\mid\! y_{i,<l},x)/\pi_{\theta_{\mathrm{old}}}(y_{i,l}\!\mid\! y_{i,<l},x)$. The training objective of DiDPO applies the standard clipped form as:
\begin{equation}
\mathcal{J}\!\!=\!\mathbb{E}_{x,i,l}\!\!\left[
\min\!\left(
\!\!\rho_{i,l}\hat A_{i,l},
\mathrm{clip}(\rho_{i,l},1\!-\!\epsilon,1\!+\!\epsilon)\hat A_{i,l}
\!\right)\right] 
\label{eq:objective}
\end{equation}
All anchors and groups are obtained from the original rollouts, thus DiDPO  introduces minor additional interaction expenditure.

\subsection{Theoretical Foundations} 
We give a compact view of why DiDPO scores anchors by both semantic size and group mass. The size term avoids fragments that are too small to express a meaningful code change. The mass term controls the reliability of the local comparison once similar sub-diffs have been matched across rollouts. For a sub-diff $s$, let $U_s$ be its normalized code units and let $d_s$ encode their textual and structural relations. A correspondence $\pi\subseteq U_s\times U_{s'}$ aligns units from two sub-diffs. Following the Gromov-Hausdorff perspective~\cite{burago2001metric,memoli2011gromov}, its distortion is
\begin{equation}
\mathrm{dis}(\pi)\!\!=\!
\sup\nolimits_{{(a,b)\in\pi,(a',b')\in\pi}}
\left|d_s(a,a')\!-\!d_{s'}(b,b')\right| 
\end{equation}
We combine this structural distortion with the source similarity used by the anchor matcher:
\begin{equation}
\Delta(s,s')\!=\!\!\!\!\!\!
\inf_{\pi\in\Pi(s,s')}
\!\!\left[
\frac{1}{2}\mathrm{dis}(\pi)
\!+\!\beta\big(1-\mathrm{sim}_{\pi}(s,s')\big)
\!\right] 
\label{eq:local-correspondence}
\end{equation}
Where $\Pi(s,s')$ is the set of valid correspondences, $\mathrm{sim}_{\pi}$ measures aligned source similarity, and $\beta$ balances structural and textual agreement. This distance formalizes the mismatch between two sub-diffs that are grouped by an anchor.
\begin{theorem}
\label{thm:local-bias}
Suppose the local reward contribution $r(s)$ is $L$-Lipschitz with respect to $\Delta$. If every sub-diff in a DiDPO group is within correspondence error $\epsilon$ of the same anchor, then replacing exact code matches with anchor-based matches changes the local group contrast by at most $O(L\epsilon)$.
\end{theorem}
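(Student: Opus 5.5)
We make the statement precise and then use the Lipschitz hypothesis directly. Fix an anchor $\bm{c}\in\mathcal{C}^{\star}$ with representative sub-diff $s_{\bm{c}}$ and occurrences $s_1,\dots,s_n\in\mathcal{O}(\bm{c})$, where $\Delta(s_j,s_{\bm{c}})\le\epsilon$ for every $j$. We interpret ``replacing exact code matches with anchor-based matches'' as follows. In the idealized exact-match group, every member would share the anchor's local reward contribution $r(s_{\bm{c}})$. In the DiDPO group, member $j$ instead carries $r(s_j)$. The \emph{local group contrast} of member $i$ is the centered quantity
\begin{equation*}
C_i=r(s_i)-\frac{1}{n}\sum_{j=1}^n r(s_j),
\end{equation*}
which is the numerator of Equation~\ref{eq:subdiff-adv} restricted to the local contribution. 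Its exact-match counterpart is $C_i^{\mathrm{ex}}=r(s_{\bm{c}})-r(s_{\bm{c}})=0$, or more generally the contrast computed with every local term replaced by $r(s_{\bm{c}})$. The goal is to bound $|C_i-C_i^{\mathrm{ex}}|$.

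\textbf{Step 1 (per-member deviation).} Lipschitzness with respect to $\Delta$ gives $|r(s_j)-r(s_{\bm{c}})|\le L\,\Delta(s_j,s_{\bm{c}})\le L\epsilon$ for every $j$. We only use $\Delta$ between each occurrence and the anchor. This matters because $\Delta$, built from the distortion term of Equation~\ref{eq:local-correspondence} plus a similarity penalty, need not satisfy a clean triangle inequality. If it did, composing correspondences (distortion is subadditive under composition, as in the Gromov--Hausdorff setting) would also give $\Delta(s_i,s_j)\le 2\epsilon$, up to a term from the $\beta$ part.

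\textbf{Step 2 (centering).} Write $\delta_j=r(s_j)-r(s_{\bm{c}})$. Then
\begin{equation*}
C_i-C_i^{\mathrm{ex}}=\delta_i-\frac1n\sum_j\delta_j,
\end{equation*}
so the triangle inequality yields $|C_i-C_i^{\mathrm{ex}}|\le 2L\epsilon=O(L\epsilon)$. The bound is uniform in $n$ and in the group composition.

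\textbf{Step 3 (normalization).} If the statement is meant for the normalized advantage $A^D$, we need the denominator $F_{norm}$ to stay controlled. If $F_{norm}$ is a fixed constant, or is bounded below by a constant $\sigma_0>0$ (for example a standard deviation clipped at $\sigma_0$), the bound becomes $O(L\epsilon/\sigma_0)$. If $F_{norm}$ is the group standard deviation, that standard deviation is itself $1$-Lipschitz in the vector of values under the $\ell_\infty$ norm, so it moves by at most $L\epsilon$; a quotient perturbation bound then gives $O(L\epsilon)$ with constants depending on $\sigma_0$.

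\textbf{Main obstacle.} The hardest step is fixing the meaning of ``local reward contribution'' and ``exact code matches'' so that the discounted returns $R_t^{(i)}$ actually decompose into a local part plus a part common to the comparison. I would state this as an explicit modeling assumption: $R_t^{(i)}=r(s)+\text{(terms unchanged by the matching rule)}$. With that assumption the non-local terms cancel identically between the two contrasts, and Steps 1–3 complete the argument.
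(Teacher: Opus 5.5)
Your proof is correct and follows the paper's argument in substance: bound each member's local reward term by Lipschitzness, then note that centering over the group keeps the deviation at $2L\epsilon$, independent of group size. The one difference is that you compare each occurrence directly with the anchor, whereas the paper first invokes a triangle inequality for $\Delta$ to get pairwise bounds $\Delta(s,u)\le 2\epsilon$; the paper never justifies that inequality for the $\beta(1-\mathrm{sim}_\pi)$ term, so your route reaches the same constant under weaker assumptions, and your remarks on $F_{norm}$ go beyond what the paper proves (there the quotient bound also needs the exact-match numerator bounded, e.g.\ by bounded returns).
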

The first theorem states that better anchor alignment gives a less biased local credit signal. The next theorem will further show that grouping further reduces variance through cross-rollout averaging.
\begin{theorem}
\label{thm:lower-error}
Assume the step return decomposes as $R^{(i)}=r(s_i)+\xi_i$, where $s_i$ is the matched sub-diff and $\xi_i$ is zero-mean non-causal trajectory noise with variance $\sigma_\xi^2$. If a DiDPO group contains $m\!>\!1$  sub-diffs with correspondence error at most $\epsilon$, then its local advantage estimator satisfies
\begin{equation}
\mathrm{MSE}(\hat A_i^{D})
\le O(L^2\epsilon^2)+O(\sigma_\xi^2/m).
\end{equation}
An episode-level broadcast estimator retains an $O(\sigma_\xi^2)$ contamination term for tokens in the same sub-diff.
\end{theorem}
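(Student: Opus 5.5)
We read $\hat A_i^{D}$ as the credit that DiDPO attaches to the \emph{matched sub-diff} $s_i$, i.e.\ to the tokens in its span $v(s_i)$. By Eq.~\ref{eq:subdiff-adv}, $\hat A_i^{D}$ is formed from the $m$ occurrences of one anchor $\bm{c}$. The estimand is the local causal contrast $A_i^{\star}=r(s_i)-\mu$, where $\mu$ is the mean local contribution over the group's matched pattern. We bound the mean-squared error of the sub-diff credit against $A_i^\star$, taking $F_{norm}\equiv 1$; a consistent, bounded-away-from-zero normaliser only rescales constants. The plan is a bias--variance split into a term controlled by Theorem~\ref{thm:local-bias} and a noise term controlled by averaging over the group.

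\textbf{Step 1 (decomposition).} Substitute $R^{(j)}=r(s_j)+\xi_j$ into Eq.~\ref{eq:subdiff-adv} to obtain
\begin{equation*}
\hat A_i^{D}=\big(r(s_i)-\bar r_G\big)+\big(\xi_i-\bar\xi_G\big),
\end{equation*}
where $\bar r_G$ and $\bar\xi_G$ are the within-group means.

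\textbf{Step 2 (bias term).} Every $s_j$ lies within $\Delta$-distance $\epsilon$ of $\bm{c}$, and $r$ is $L$-Lipschitz with respect to $\Delta$. Hence $|r(s_j)-r(\bm{c})|\le L\epsilon$, so $|\bar r_G-\mu|\le 2L\epsilon$. This is exactly the contrast perturbation of Theorem~\ref{thm:local-bias}, and squaring gives the $O(L^2\epsilon^2)$ term.

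\textbf{Step 3 (noise term, the crux).} The term $\xi_i-\bar\xi_G$ has variance $\sigma_\xi^2(1-1/m)$ for a single sample, so a per-sample bound cannot give $\sigma_\xi^2/m$. We must use that $\xi$ is \emph{non-causal}: it is independent of which tokens realise $s_i$.

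The quantity that actually enters the update for the tokens of the shared pattern is the contribution of $\hat A^{D}$ to the gradient along that pattern. Since all $m$ occurrences realise the same code change up to $\epsilon$, their score directions agree up to $O(\epsilon)$. The pattern therefore receives the aggregate $\frac1m\sum_{j\in G}\hat A_j^{D}\,\nabla\log\pi(s_j)$.

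Project each occurrence's credit onto its causal component, i.e.\ condition on $s_j$. Because $\mathbb{E}[\xi_j\mid s_j]=0$, the estimate of $A_i^\star$ carried to the sub-diff tokens is $r(s_i)-\bar r_G$ plus an average of $m$ independent zero-mean noise terms. That average has variance $\sigma_\xi^2/m$, and the $O(\epsilon)$ mismatch of score directions is absorbed into the $L^2\epsilon^2$ term.

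This is the step I expect to be the main obstacle. The cleanest way through is to \emph{define} the estimator at the level of the anchor, as the credit shared by the $m$ tokens-equivalent occurrences, and make explicit that the noise of a single trajectory is diluted across those $m$ occurrences. I would state this identification as part of the setting, rather than hide it. Adding Steps 2 and 3 via $(a+b)^2\le 2a^2+2b^2$ then yields $\mathrm{MSE}(\hat A_i^{D})\le O(L^2\epsilon^2)+O(\sigma_\xi^2/m)$.

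\textbf{Step 4 (contrast with broadcasting).} For $A^E(\bm{\tau}^{(i)})$, every token of $s_i$ inherits $R^{(i)}-\bar R$, which contains the full $\xi_i$ of its own trajectory. No aggregation over matched occurrences takes place, since each trajectory's tokens are credited only by that trajectory. The noise contribution is therefore $\sigma_\xi^2(1-1/N)=\Theta(\sigma_\xi^2)$, independent of how many rollouts share the sub-diff. This gives the stated $O(\sigma_\xi^2)$ contamination term.
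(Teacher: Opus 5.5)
Your Step~3 has a genuine gap, and the repair you sketch does not work. The obstacle you found is real. Take $F_{norm}\equiv 1$ and $\xi$ independent across rollouts and of the $s_j$. Then Eq.~\eqref{eq:subdiff-adv} gives $\hat A_i^{D}-A_i^\star=(\mu-\bar r_G)+(\xi_i-\bar\xi_G)$, hence $\mathrm{MSE}(\hat A_i^{D})=\mathbb{E}[(\bar r_G-\mu)^2]+\sigma_\xi^2(1-1/m)$, which is at least $\sigma_\xi^2/2$ for every $m\ge 2$.

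Aggregating at the pattern level cannot change this, because group-relative advantages are centred: $\sum_{j\in G}\hat A_j^{D}=0$ identically, for any common $F_{norm}$. So for any fixed vector $g$, $\frac1m\sum_{j\in G}\hat A_j^{D}\nabla\log\pi(s_j)=\frac1m\sum_{j\in G}\hat A_j^{D}\big(\nabla\log\pi(s_j)-g\big)$. Under your premise that the score directions agree up to $O(\epsilon)$, this is $O(\epsilon)$ times the size of the advantages. (That premise is not implied by $\Delta(s_j,\bm{c})\le\epsilon$, since the occurrences are generated in different contexts.) The shared pattern therefore receives essentially no signal: neither $r(s_i)-\bar r_G$ nor a noise term of variance $\sigma_\xi^2/m$.

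Conditioning does not help either. $\mathbb{E}[\hat A_i^{D}\mid s_1,\dots,s_m]=r(s_i)-\bar r_G$ carries no residual noise at all, and it bounds the error of a conditional mean, not of the estimator the statement is about. Your fallback of \emph{defining} the estimator at the anchor level, so that the noise is diluted, replaces the theorem rather than proving it. Step~4 then inherits the problem: for per-sample estimators the noise terms are $\sigma_\xi^2(1-1/N)$ for broadcasting and $\sigma_\xi^2(1-1/m)$ for DiDPO. These are of the same order, so no separation follows.

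Following the paper would not rescue you. Its proof handles the bias as your Step~2 does, via Theorem~\ref{thm:local-bias}; your version, comparing each $r(s_j)$ directly with $r(\bm{c})$, is cleaner because it needs no triangle inequality for $\Delta$. It then gets the $O(\sigma_\xi^2/m)$ term by computing $\mathrm{Var}\big(\frac1m\sum_{j}\xi_j\big)$, which is the noise of the group \emph{baseline} only. It silently drops the rollout's own $\xi_i$, precisely the per-sample term you flagged, yet in the broadcast comparison it does count $\xi_i$ as contamination.

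What this line of argument actually proves is a baseline-error bound, $\mathbb{E}[(\bar R_G-\mu)^2]\le O(L^2\epsilon^2)+O(\sigma_\xi^2/m)$, with $\bar R_G$ the group-mean return. Equivalently, it bounds the error of $\hat A_i^{D}$ against the oracle-baseline target $R^{(i)}-\mu$. Under that target, though, broadcasting's error is also just a baseline error, $\mu-\bar R$ over all $N$ rollouts, whose noise part is an average of $N$ terms. Its disadvantage then becomes a bias from baselining against unmatched rollouts, not an $O(\sigma_\xi^2)$ contamination.

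So you should either state and prove that baseline-level version, or keep the per-sample estimator and accept a noise term of order $\sigma_\xi^2(1-1/m)$. The statement as written is reached neither by your Step~3 nor by the paper's argument.
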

Together, the two results explain the statistical role of groupability. Anchor quality limits local-credit bias, while cross-rollout support reduces variance from non-causal trajectory components. In practice, $\epsilon$ is implicitly controlled by the similarity matching criteria and the groupability score used in anchor selection. Proof of theorems is in \textbf{\underline{Appendix ~\ref{appx:a}}}.

\begin{table*}[t]
\caption{Performance comparison on standard code generation benchmarks. The best results are in \textbf{bold}. }
\vspace{-0.4cm}
\centering
\label{tab:my-table:1}  
\setlength\tabcolsep{8pt} 
\resizebox{2\columnwidth}{!}{
\begin{tabular}{lccccccccl}
\hline
\multirow{2}{*}{\textbf{Method}} & \multicolumn{4}{c}{\textbf{APPS}} & \multirow{2}{*}{\textbf{HumanEval}} & \multirow{2}{*}{\textbf{MBPP}} & \multirow{2}{*}{\textbf{LiveCodeBench}} & \multirow{2}{*}{\textbf{Leetcode}} & \multicolumn{1}{c}{\multirow{2}{*}{\textbf{Avg}}} \\ \cline{2-5}
 & \textbf{All} & \textbf{Introductory} & \textbf{Interview} & \textbf{Competition} &  &  &  &  & \multicolumn{1}{c}{} \\ 
\textit{\# Samples} & \textit{5000}  & \textit{1000}  & \textit{3000}  & \textit{1000}  & \textit{164} & \textit{257} & \textit{1054}  & \textit{228} & — \\ \hline
Kimi-K2.6 & 84.4 & 92.5 & 80.1 & 89.0 & 97.0 & 82.2 & 74.9 & 72.4 & 82.2\\
Qwen3.6-27B & 87.6 & 91.6 & 87.7  & 83.4 & 96.9 & 79.1 & 72.2 & 62.3 & 79.6 \\
GLM-5.2 & 86.3 & 93.0 & 84.1 & 86.1 & 93.3 & 69.7 & 76.5 & 78.9 & 80.9 \\
GPT-5.5 & \textbf{89.5} & \textbf{95.5} & \textbf{88.6} & \textbf{86.4} & \textbf{100.0} & \textbf{83.7} & \textbf{100.0} & \textbf{85.2} & \textbf{91.7} \\
Skywork-OR1 & — & — & — & — & 87.2 & — & 33.8 & 60.0 & — \\ \hline  	 
Qwen2.5-Coder-7B & 16.9 & 40.8 & 12.7 & 5.5 & 69.3 & 61.1 & 15.0 & 14.3 & 35.3 \\  \hdashline 
\multicolumn{2}{l}{\textbf{\textit{Code-Reasoning Baselines}}} & \multicolumn{1}{l}{} & \multicolumn{1}{l}{} & \multicolumn{1}{l}{} & \multicolumn{1}{l}{} & \multicolumn{1}{l}{} & \multicolumn{1}{l}{} & \multicolumn{1}{l}{} &  \\
\! + \!CoT & 25.9 & 52.6 & 22.0 & 11.0 & 70.5 & 67.4 & 11.9 & 12.0 & 37.5 \\
\! + \!CodeAct & 14.7 & 34.4 & 11.6 & 4.4 & 66.0 & 62.0 & 19.1 & 18.2 & 36.0 \\
\! + \!Self-Planning & 17.4 & 40.4 & 13.6 & 6.0 & 68.9 & 66.0 & 26.5 & 15.5 & 38.9 \\
\hdashline 
\multicolumn{2}{l}{\textbf{\textit{Coding (Agentic) RL Baselines}}} & \multicolumn{1}{l}{} & \multicolumn{1}{l}{} & \multicolumn{1}{l}{} & \multicolumn{1}{l}{} & \multicolumn{1}{l}{} & \multicolumn{1}{l}{} & \multicolumn{1}{l}{} &  \\

\! + \!SFT & 16.0 & 38.3 & 11.0 & 8.8 & 66.2 & 64.2 & 19.1 & 21.5 & 37.4 \\
\! + \!CodeRL+ & 24.4 & 50.8 & 19.5 & 12.9 & 72.0 & 68.2 & 32.0 & 17.0 & 42.7 \\
\! + \!GRPO & 23.8 & 49.6 & 18.3 & 14.3 & 70.7 & 69.8 & 31.3 & 18.4 & 42.8 \\
\! + \!GiGPO & 25.1 & 53.8 & 18.0 & \textbf{17.7} & 67.0 & 71.7 & 35.0 & 22.1 & 44.2  \\
\rowcolor{cyan!20}
\! + \!DiDPO (ours) & \textbf{31.3} & \textbf{54.7} & \textbf{28.4} & 16.8 & \textbf{72.3} & \textbf{74.2} & \textbf{39.3} & \textbf{24.7} & \textbf{48.4}  \\ \hline
Qwen3.5-4B & 43.8 & 72.1 & 44.9 & 12.0 & 83.7 & 68.9 & 31.3 & 20.5 & 49.6 \\ \hdashline 
\multicolumn{10}{l}{\textbf{\textit{Code-Reasoning Baselines}}} \\
\! + \!CoT & 45.8 & 70.4 & 48.5 & 13.1 & 86.3 & 67.7 & 36.0 & 25.9 & 52.3 \\
\! + \!CodeAct & 40.7 & 70.5 & 39.1 & \textbf{16.0} & 84.2 & 61.5 & 33.0 & 25.7 & 49.0 \\
\! + \!Self-Planning & 45.1 & 74.0 & 46.4 & 12.0 & 85.4 & 54.9 & 26.7 & 28.1 & 48.1 \\ 
\hdashline 
\multicolumn{10}{l}{\textbf{\textit{Coding (Agentic) RL Baselines}}} \\ 

\! + \!SFT & 43.9 & 69.6 & 46.8 & 9.8 & 82.9 & 70.1 & 32.8 & 20.3 & 50.0 \\
\! + \!CodeRL+ & 47.1 & 74.1 & 50.1 & 11.0 & 85.0 & 70.7 & 35.3 & 27.7 & 53.2 \\
\! + \!GRPO & 45.0 & 76.0 & 46.2 & 10.4 & 85.2 & 67.4 & 30.2 & 25.4 & 50.6 \\
\! + \!GiGPO & 48.5 & 77.6 & 50.2 & 14.2 & 88.4 & 72.0 & 31.0 & 28.5 & 53.7 \\
\rowcolor{cyan!20}
\! + \!DiDPO (ours) & \textbf{51.3} & \textbf{82.2} & \textbf{53.0} & 15.5  & \textbf{91.8} & \textbf{76.9} & \textbf{40.4} & \textbf{32.8} & \textbf{58.6} \\
\hline
\end{tabular}}\vspace{-0.4cm}
\end{table*} 

\section{Experiments}

\subsection{Experimental Setup}
\paragraph{Benchmarks.} Following CodeRL+~\cite{jiang2026coderl+}, we use prime data from~\cite{cui2025process} as the  training set for all RL-based methods, but filter out those that belongs to the validation set or do not conform to long-horizon code generation patterns. For evaluation, eight code-generation benchmarks are selected: APPS~\cite{apps2021hendrycks}, HumanEval~\cite{chen2021evaluating}, MBPP~\cite{austin2021program}, LiveCodeBench~\cite{jain2025livecodebench}, LeetCode~\cite{xia2025leetcodedataset}, USACO~\cite{shi2024can}, OJBench~\cite{wang2025ojbench} and ICPC~\cite{xu2026icpc}. HumanEval and MBPP assess function-level synthesis, while APPS spans programming problems from introductory exercises to competition-level challenges. LiveCodeBench provides a continuously updated evaluation of code generation with reduced contamination risk. LeetCode, USACO, OJBench and ICPC place greater emphasis on longer competition-level algorithmic reasoning.

\paragraph{Training Pipeline.}
To standardize the boundaries between tool-call and reasoning in code generation, we place the target code in a sandbox environment (temp dir), where the agent can only perform an \texttt{add/delete/none} action at each step. 
For the training phase, since weaker models may not reliably follow the multi-turn thought-action pattern, we introduce a cold-start stage. 
Specifically, we first select a subset of medium-sized and moderate-difficulty tasks from the training set, and then augment them using template filling and rewriting with GPT-5.5. 
On the resulting 7K augmented dataset, we prompt Qwen3.6-27B to generate up to 12-turn rollouts while strictly adhering to the thought-action format. 
We then apply rejection sampling and LLM-based evaluation, ultimately collecting around 3K high-quality trajectories for Supervised Fine-Tuning (SFT). 
Finally, we apply DiDPO or other RL methods to the SFT checkpoint.

The training epochs for SFT and DiDPO stage are 2 and 120 respectively. Unless otherwise specified, all methods use the same set of hyper-parameters for fairness. Evaluation is performed with held-out tasks and deterministic test execution. More implementation details are in \textbf{\underline{Appendix ~\ref{appx:e}}}.

\textbf{Baselines.} We compare the proposed DiDPO with \textbf{(1):} Large open/closed-source models like Kimi-K2.6, Qwen3.6-27B, GLM-5.2, GPT-5.5 and Skywork-OR1~\cite{he2025skywork}; \textbf{(2):} Code-reasoning baselines, like CoT~\cite{wei2022chain}, CodeAct~\cite{wang2024codeact}, Self-Planning~\cite{jiang2024self}; \textbf{(3):} Coding RL / Agentic RL baselines, like CodeRL+~\cite{jiang2026coderl+}, GRPO~\cite{shao2024deepseekmath}, GiGPO~\cite{feng2025gigpo} and DiDPO (with SFT ablated). More introductions are in \textbf{\underline{Appendix~\ref{appx:c} / ~\ref{appx:d}}}.


\begin{table*}[t]
\caption{Performance comparison on competition-level algorithmic benchmarks. The best results are in \textbf{bold}. }
\vspace{-0.4cm}
\centering
\label{tab:my-table:2}  
\setlength\tabcolsep{11pt} 
\resizebox{2\columnwidth}{!}{
\begin{tabular}{lccccccccccc}
\hline
\multirow{2}{*}{\textbf{Method}} & \multicolumn{5}{c}{\textbf{USACO}} & \multicolumn{4}{c}{\textbf{OJBench}} & \multirow{2}{*}{\textbf{ICPC}} & \multirow{2}{*}{\textbf{Avg}} \\ \cline{2-10}
 & \textbf{All} & \textbf{Bronze} & \textbf{Silver} & \textbf{Gold} & \textbf{Platinum} & \textbf{All} & \textbf{Easy} & \textbf{Medium} & \textbf{Hard} &   &  \\
\textit{\# Samples} & \textit{307} & \textit{123} & \textit{100} & \textit{63} & \textit{21} & \textit{159} & \textit{20} & \textit{53} & \textit{86} & \textit{106} & — \\ \hline
Kimi-K2.6 & 79.1 & 95.0 & 79.8 & 64.4 & 16.7 & 32.3 & 89.5 & 54.3 & 5.5 & 5.7 & 39.0 \\
Qwen3.6-27B & 68.7 & 86.2 & 67.0 & 54.0 & 19.0 & \textbf{48.5} & \textbf{100.0} & \textbf{100.0} & 4.7 & \textbf{21.4} & 46.2 \\
GLM-5.2 &  63.5 & 80.5 & 64.0 & 49.2 & 4.8 & 13.8 & 70.0 & 15.1 & 0.0 & 5.7 & 27.7 \\
GPT-5.5 & \textbf{92.2} & \textbf{99.2} & \textbf{97.0} & \textbf{87.3} & \textbf{42.9} & 48.4 & 95.0 & 81.1 & \textbf{17.4} & 19.8 & \textbf{53.5} \\
Skywork-OR1 & — & — & — & — & — & — & — & — & — & — & — \\ \hline
Qwen2.5-Coder-7B & 6.5 & 9.8 & 5.0 & 4.8 & \textbf{0.0} & 3.8 & 25.0 & 1.9 & 0.0 & 0.9 & 3.7 \\ \hdashline   

\multicolumn{12}{l}{\textbf{\textit{Code-Reasoning Baselines}}} \\
\! + \!CoT & 12.7 & 24.4 & 7.0 & 3.2 & \textbf{0.0} & 8.6 & 50.0 & \textbf{3.8} & \textbf{1.9} & 7.5 & 9.6 \\
\! + \!CodeAct & 7.1 & 15.4 & 2.0 & 1.6 & \textbf{0.0} & 4.4 & 25.0 & \textbf{3.8} & 0.0 & 0.0 & 3.9 \\
\! + \!Self-Planning & 5.2 & 8.9 & 5.0 & 0.0 & \textbf{0.0} & 3.8 & 25.0 & 1.9 & 0.0 & 2.8 & 3.9 \\ \hdashline 
\multicolumn{12}{l}{\textbf{\textit{Coding (Agentic) RL Baselines}}} \\
\! + \!SFT & 5.2 & 10.6 & 2.0 & 1.6 & \textbf{0.0} & 3.8 & 25.0 & 1.9 & 0.0 & 4.7 & 4.6  \\
\! + \!CodeRL+ & 13.7 & 26.8 & 5.0 & \textbf{6.3} & \textbf{0.0} & 7.3 & 40.0 & \textbf{3.8} & \textbf{1.9} & \textbf{8.5} & 9.8 \\
\! + \!GRPO & 6.8 & 12.2 & 5.0 & 1.6 & \textbf{0.0} & 5.7 & 40.0 & 1.9 & 0.0 & 4.7 & 5.7 \\
\! + \!GiGPO & 9.1 & 13.8 & \textbf{10.0} & 1.6 & \textbf{0.0} & 5.7 & 40.0 & 1.9 & 0.0 & 2.8 & 5.9 \\ 
\rowcolor{cyan!20}
\! + \!DiDPO (ours) & \textbf{15.6} & \textbf{27.6} & \textbf{10.0} & \textbf{6.3} & \textbf{0.0} & \textbf{8.0} & \textbf{45.0} & \textbf{3.8} & \textbf{1.9} & \textbf{8.5} & \textbf{10.7} \\ \hline
Qwen3.5-4B & 31.6 & 49.6 & 26.0 & 15.9 & \textbf{0.0} & 8.8 & 50.0 & 7.5 & 0.0 & 3.8 & 14.7 \\ \hdashline 
\multicolumn{12}{l}{\textbf{\textit{Code-Reasoning Baselines}}} \\
\! + \!CoT & 40.1 & 57.7 & 38.0 & \textbf{22.0} & \textbf{0.0} & 13.2 & \textbf{70.0} & 13.2 & 0.0 & 1.9 & 18.4 \\
\! + \!CodeAct & 29.0 & 32.5 & 12.0 & 11.1 & \textbf{0.0} & 8.2 & 35.0 & 9.4 & \textbf{1.2} & 0.9 & 12.7 \\
\! + \!Self-Planning & 24.8 & 36.6 & 23.0 & 12.7 & \textbf{0.0} & 8.2 & 40.0 & 9.4 & 0.0 & 0.0 & 11.0 \\  \hdashline 
\multicolumn{12}{l}{\textbf{\textit{Coding (Agentic) RL Baselines} }}\\
\! + \!SFT & 32.7 & 50.4 & 30.0 & 13.2 & \textbf{0.0} & 4.4 & 30.0 & 1.9 & 0.0 & 0.0 & 12.4 \\
\! + \!CodeRL+ & 37.5 & 54.5 & 37.0 & 17.4 & \textbf{0.0} & 11.3 & 55.0 & 11.3 & \textbf{1.2} & 2.8 & 17.2  \\
\! + \!GRPO & 35.1 & 48.9 & 37.0 & 17.0 & \textbf{0.0} & 7.6 & 50.0 & 3.8 & 0.0 & 2.8 & 15.2 \\
\! + \!GiGPO & 36.8 & 51.2 & 40.0 & 15.9 & \textbf{0.0} & 8.2 & 55.0 & 7.5 & \textbf{1.2} & 3.8 & 16.9  \\ 
\rowcolor{cyan!20}
\! + \!DiDPO (ours) & \textbf{43.0}  & \textbf{60.9}  & \textbf{44.0}  & 20.6   & \textbf{0.0} & \textbf{13.9}  & 60.0   & \textbf{17.0}  & \textbf{1.2} & \textbf{9.4} & \textbf{22.1}  \\ \hline
\end{tabular}}\vspace{-0.4cm}
\end{table*}

\subsection{Main Results}

\noindent\textbf{Overall Performance.} Table~\ref{tab:my-table:1} reports results on two backbones. DiDPO achieves the highest average on both: 48.4\% with Qwen2.5-Coder-7B and 58.6\% with Qwen3.5-4B, surpassing the strongest baseline GiGPO by 4.2 and 4.9 respectively. DiDPO narrows the gap to GPT-5.5 from 56.4\% (base model 35.3\% vs.\ 91.7\%) to 43.3\% (48.4\% vs.\ 91.7\%). Table~\ref{tab:my-table:2} further reports results on competition benchmarks, where DiDPO achieves 15.6\% on USACO with the 7B backbone, more than doubling GRPO's 6.8\%.

\noindent\textbf{Compared with Code-Reasoning Baselines.} Among reasoning baselines, no single strategy dominates: CoT leads on APPS while Self-Planning is strongest on LiveCodeBench. CodeAct underperforms even the base model on APPS for the 7B backbone, i.e., 14.7\% vs.\ 16.9\%, showing that untrained tool-use can disrupt synthesis. DiDPO outperforms the best reasoning method by 5.4  on APPS and 12.8 on LiveCodeBench with the 7B backbone, confirming that RL provides gains inaccessible to prompting alone.

\noindent\textbf{Compared with Coding / Agentic RL Baselines.} DiDPO improves over GRPO by 5.6 on average with the 7B backbone. Since both share the same episode-level advantage, these gains are directly attributable to sub-diff credit. Compared with GiGPO, DiDPO leads by 4.2. This is because GiGPO groups by identical environment states, which fails when functionally analogous edits modify different code regions; the advantage is most pronounced on APPS Interview, i.e., +10.4, where multi-step reasoning spans multiple functions. SFT slightly outperforms base models but largely underperforms all RL methods applied over it. This suggests that, building on format alignment capability, RL methods can further enhance the model's performance in long-horizon competitive coding and reasoning.

\begin{figure}[t] 
    \centering
    \includegraphics[width=1\linewidth]{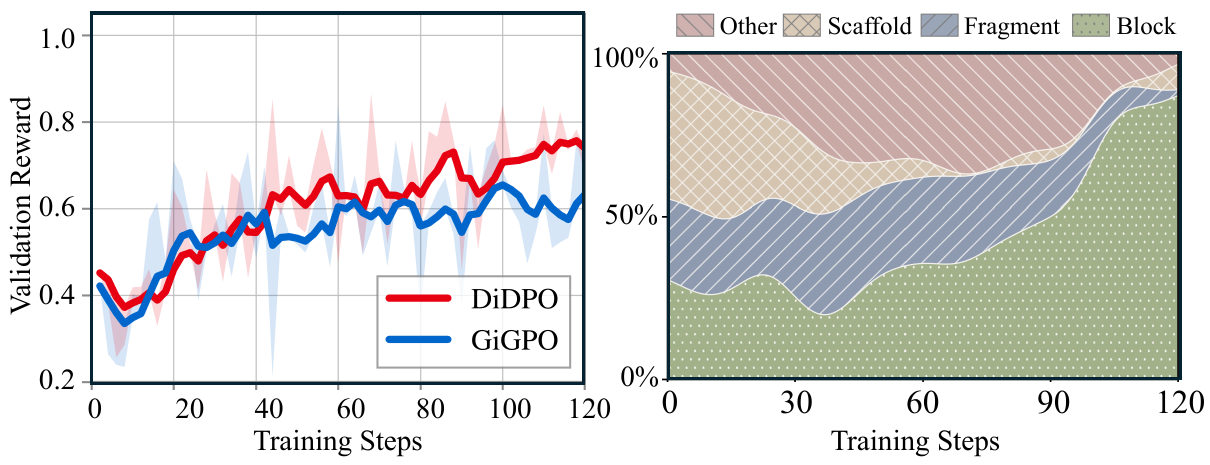} 
    \vspace{-0.6cm}
    \caption{\textbf{Left:} validation reward over training. \textbf{Right:} Evolution dynamics of group types in DiDPO, which are classified by GPT-5.5. \textit{Block:} sub-diffs on meaningful functional blocks. \textit{Fragment:} sub-diffs on fragments. \textit{Scaffold:} sub-diffs on object definitions and declarations. \textit{Other:} sub-diffs on meaningless lines (like blank lines).} 
    \vspace{-0.5cm}
    \label{fig:dynamics}
\end{figure}

\subsection{Analysis}

\noindent\textbf{Learning Dynamics Analysis.} Figure~\ref{fig:dynamics} (left) shows APPS accuracy over training. All methods improve similarly in the first 20 steps, driven by the shared episode-level advantage. After step 40, DiDPO continues to climb while GiGPO plateaus, indicating that sub-diff credit becomes more informative as the policy diversifies its edits. Figure~\ref{fig:dynamics} (right) tracks group composition with 4 types classified by GPT-5.5. \textit{Block} (function bodies and loop structures) steadily increase their share, while \textit{Fragment} groups (short isolated edit segments), and \textit{Scaffold} groups (imports and definitions) decline. This shift matches the DiDPO design, where functional blocks earn higher GS and dominate anchor selection, thus providing more meaningful credit. The evolution of group mass distributions is provided in  \textbf{\underline{Appendix ~\ref{appx:f}}}.

\begin{table}[t]
\caption{Ablation studies on different design of ``Groupability Score (GS)'' \textbf{(left)} and components \textbf{(right)}.}
\vspace{-0.2cm}
\begin{center}
\renewcommand{\arraystretch}{1}
\setlength\tabcolsep{2pt}
\resizebox{0.48\textwidth}{!}{
\begin{minipage}{0.32\textwidth}
\centering
\begin{tabular}{l|cc}
\hline
Design & APPS & OJBench \\ \hline
$\bar L(\bm{c})\times n(\bm{c})$ & 16.5 & 1.9 \\
$\bar L(\bm{c})+ n(\bm{c})$ & 24.4 & 5.7 \\
Qwen3.6-27B Judge & 21.5 & 1.9 \\
\rowcolor{cyan!20}
Ours (Eq. \ref{eq:anchor-groupability}) & 31.3 & 8.0 \\ \hline
\end{tabular}
\end{minipage}
\hspace{0.02\textwidth}
\begin{minipage}{0.32\textwidth}
\centering
\begin{tabular}{l|cc}
\hline
Method & APPS  & OJBench \\ \hline
\rowcolor{cyan!20}
DiDPO & 31.3 & 8.0 \\
w/o $A^E$ & 10.4 & 3.8 \\
w/o $A^D$ & 23.8 & 5.7 \\
w/o sub-diff & 25.0 & 4.4 \\ \hline
\end{tabular}
\end{minipage}}
\end{center}
\label{tab:3333}
\vspace{-0.6cm}
\end{table}

\noindent\textbf{Different Groupability Score (GS) Design.} Table~\ref{tab:3333} (left) compares GS formulations. Our design in Eq.~\ref{eq:anchor-groupability} achieves 31.3\% on APPS, versus 24.4\% for the additive variant and 21.5\% for an LLM judge (which uses an LLM to group sub-diffs) based on Qwen3.6-27B. We think the additive form allows one strong factor to compensate for a weak one, but the saturating exponential compresses both factors into $(0,1)$, preventing either from dominating while keeping moderate pairs viable. The costly LLM judge also underperforms because it favors a few large-sized groups, while format issues also render it ineffective occasionally.

\noindent\textbf{Ablation Study of DiDPO's Components.} We ablate DiDPO's components and show the results in Table~\ref{tab:3333} (right). Removing $A^E$ drops APPS from 31.3\% to 10.4\%, confirming episode-level signal is indispensable. Removing $A^D$ reduces to 23.8\%, which is comparable to GRPO and isolates the 7.5-point marginal gain from localized credit. Removing sub-diff decomposition, i.e., treating each whole diff as the atomic grouping unit, yields 25.0\%, outperforming GRPO but 6.3-point below full DiDPO. This gap  together with Figure~\ref{fig:show}  confirms that: without sub-diff decomposition, partially overlapping diffs cannot be matched, producing smaller and noisier groups. All three components are necessary and their combination exceeds additive contribution.

\begin{figure}[t] 
    \centering
    \includegraphics[width=1\linewidth]{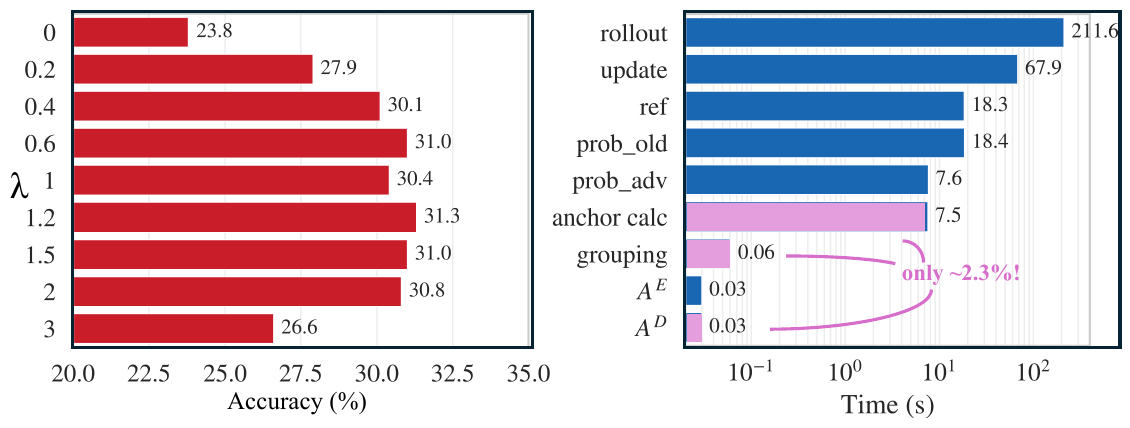} 
    \vspace{-0.7cm}
    \caption{\textbf{Left:} Sensitivity analysis on $\lambda$. \textbf{Right:} Analysis of per-step training time breakdown. Pink bars are DiDPO-specific. Blue bars are these shared with GRPO.} 
    \vspace{-0.5cm}
    \label{fig:sensei}
\end{figure}

\noindent\textbf{Sensitivity Analysis of $\lambda$.} As shown in Figure~\ref{fig:sensei} (left), the performance with respect to $\lambda$ follows an inverted U-shape, peaking near $\lambda=1.2$. Below 0.6, the diff-level signal is suppressed and DiDPO reverts to GRPO-level performance, consistent with the $A^D$ ablation. Above 1.2, the policy overfits to local credit, learning edits that resemble high-reward peers but fail to compose globally. The method tolerates moderate variation around the optimum: accuracy is stable across 0.6 to 1.2, and we select $\lambda$ as 1.2 in practice.

\noindent\textbf{Scaling and Efficiency Analysis.} Figure~\ref{fig:sensei} (right) breaks down per-step training time. DiDPO  only  adds approximately 2.3\% overhead over GRPO, dominated by cross-rollout similarity computation for anchor calculation (Eq.~\ref{eq:anchor-match}). This cost scales quadratically with sub-diff candidates but is bounded in practice by the similarity threshold $\eta$. Greedy anchor selection (Eq.~\ref{eq:anchor-selection}) is linear and negligible. Moreover, this overhead is training-only since inference uses standard autoregressive decoding with no grouping. Given the 5.6 average gain over GRPO (Table~\ref{tab:my-table:1}), the overhead is well justified, and caching or approximate search could further reduce it.


\section{Conclusion}

DiDPO starts from the observation that comparable behavior in coding agent trajectories often appears as recurring or partially recurring diffs across rollouts. By selecting similar sub-diffs from these diffs with the groupability score and constructing dynamic advantage groups, DiDPO turns generated code into the substrate of critic-free policy optimization. Extensive experiments show that DiDPO mostly outperforms existing baselines. DiDPO is effective and a practical step toward coding agentic RL.

\bibliographystyle{plainnat}
\bibliography{main}


\clearpage
\newpage
\appendix

\section{Proofs for Theoretical Foundations}\label{appx:a}

\subsection{Proof of Theorem 1}
\begin{proof}
Fix one prompt and consider a DiDPO group $\mathcal{G}$ induced by one selected anchor. By assumption, every sub-diff $s\in\mathcal{G}$ is within correspondence distance $\epsilon$ of the anchor prototype $a$, namely $\Delta(s,a)\le\epsilon$. By the triangle inequality for the induced correspondence distance, any two sub-diffs $s,u\in\mathcal{G}$ satisfy
\[
\Delta(s,u)\le \Delta(s,a)+\Delta(a,u)\le 2\epsilon .
\]
Since the local reward contribution $r(\cdot)$ is $L$-Lipschitz with respect to $\Delta$,
\[
|r(s)-r(u)|\le L\Delta(s,u)\le 2L\epsilon .
\]
Let the ideal local contrast for $s$ be
\[
A^\star(s)=r(s)-\frac{1}{|\mathcal{G}|}\sum_{u\in\mathcal{G}}r(u).
\]
If exact matches were available, the group baseline would compare only identical causal code changes. Anchor-based matching replaces exact equality by $\epsilon$-accurate correspondences. The above Lipschitz bound implies that every local reward term in the anchor-based contrast differs from its exact-match counterpart by at most a constant multiple of $L\epsilon$. Averaging over $\mathcal{G}$ preserves the same order, hence
\[
\left|
\mathbb{E}\big[\hat A^{D}(s)\big]-A^\star(s)
\right|
\le C L\epsilon ,
\]
for a constant $C$ independent of rollout length and group size. This gives the stated $O(L\epsilon)$ bound.
\end{proof}

\subsection{Proof of Theorem 2}
\begin{proof}
For rollout $i$, write the step return as
\[
R_i=r(s_i)+\xi_i,
\]
where $s_i$ is the matched sub-diff and $\xi_i$ collects non-causal trajectory effects, including copied context, irrelevant edits, exploration commands, and formatting behavior. Assume $\mathbb{E}[\xi_i]=0$ and $\mathrm{Var}(\xi_i)=\sigma_\xi^2$.

Consider a DiDPO group $\mathcal{G}$ of size $m>1$ whose matched sub-diffs have correspondence error at most $\epsilon$. By Theorem 1, replacing the ideal local contrast with an anchor-based sub-diff contrast introduces squared bias
\[
O(L^2\epsilon^2).
\]
It remains to bound the residual variance. The group baseline averages the returns of $m$ matched sub-diffs. Since the non-causal terms are zero mean and are averaged within the matched group, their contribution scales as
\[
\mathrm{Var}\left(\frac{1}{m}\sum_{s_j\in\mathcal{G}}\xi_j\right)
=O(\sigma_\xi^2/m),
\]
with the same order under bounded weak correlation among matched rollouts. Combining the squared bias and the residual variance yields
\[
\mathrm{MSE}(\hat A_i^{D})
\le
O(L^2\epsilon^2)+O(\sigma_\xi^2/m).
\]

For episode-level broadcasting, the same scalar return $R_i$ is assigned to all response tokens. Tokens inside the matched sub-diff therefore receive a signal that still contains $\xi_i$. Since this estimator does not condition on the sub-diff group, it has no averaging mechanism that removes non-causal variation from unrelated parts of the trajectory. Its error consequently retains an $O(\sigma_\xi^2)$ contamination term. Thus, when reliable matched sub-diffs exist and $m>1$, the DiDPO local estimator has a lower credit-error order for tokens in the causal code region.
\end{proof}

\section{Greedy Selection for Anchors}\label{appx:b}

Equation 10 selects a small set of anchors that covers as many useful matched sub-diffs as possible without repeatedly counting the same occurrence. Let $\mathcal{I}$ denote all candidate sub-diffs and let $\mathcal{C}$ denote all anchor candidates. For a selected anchor set $\mathcal{A}$, define
\[
F(\mathcal{A})=
\sum_{s\in\mathcal{I}}\max_{a\in\mathcal{A}}
\mathbf{1}[s\in\mathcal{O}(a)]\mathrm{GS}(a).
\]
This objective is monotone because adding an anchor cannot reduce the best covered score of any sub-diff. It is also submodular. To see this, fix one candidate sub-diff $s$ and define
\[
F_s(\mathcal{A})=\max_{a\in\mathcal{A}}
\mathbf{1}[s\in\mathcal{O}(a)]\mathrm{GS}(a).
\]
If $\mathcal{A}\subseteq\mathcal{B}$, then $\mathcal{B}$ already covers $s$ at least as well as $\mathcal{A}$. Adding a new anchor $a$ can therefore improve $F_s(\mathcal{B})$ no more than it improves $F_s(\mathcal{A})$:
\[
F_s(\mathcal{A}\cup\{a\})-F_s(\mathcal{A})
\ge
F_s(\mathcal{B}\cup\{a\})-F_s(\mathcal{B}).
\]
Summing over all $s\in\mathcal{I}$ preserves this diminishing-return property, so $F$ is a monotone submodular function. The exact maximizer under $|\mathcal{A}|\le K$ requires a combinatorial search over anchor subsets, which is unnecessary here because the objective belongs to the standard cardinality-constrained submodular maximization family. DiDPO therefore uses greedy selection, which is the canonical approximation method for this setting.

\begin{algorithm}[h]
\caption{Greedy Anchor Selection}
\begin{algorithmic}[1]
\STATE Initialize $\mathcal{A}\leftarrow\emptyset$.
\WHILE{$|\mathcal{A}|<K$}
\STATE Choose $a^\star=\arg\max_{a\in\mathcal{C}\setminus\mathcal{A}}
\big(F(\mathcal{A}\cup\{a\})-F(\mathcal{A})\big)$.
\IF{$F(\mathcal{A}\cup\{a^\star\})=F(\mathcal{A})$}
\STATE Break.
\ENDIF
\STATE Update $\mathcal{A}\leftarrow\mathcal{A}\cup\{a^\star\}$.
\ENDWHILE
\STATE Return $\mathcal{A}$.
\end{algorithmic}
\end{algorithm}

By the classical result of Nemhauser et al.~\cite{nemhauser1978analysis}, this greedy procedure obtains a $(1-1/e)$ approximation to the best anchor set under the budget $K$. Thus the greedy rule is not merely a heuristic: for the objective in Eq. 10, repeatedly choosing the largest marginal gain is the standard guaranteed solution strategy. In practice, DiDPO computes marginal gains using the currently best covered score of each candidate sub-diff, so each greedy step only updates sub-diffs covered by the newly added anchor.

\section{Evaluation Benchmarks}\label{appx:c}

We evaluate DiDPO on eight coding benchmarks, ranging from compact Python functions to competition-level programming. All results are based on execution correctness, which requires the code generated by agents to pass all of the test cases. When a benchmark provides meaningful difficulty partitions, we report both the overall score and the corresponding subcategory scores.

\paragraph{APPS.}
APPS~\cite{apps2021hendrycks} contains 10,000 programming problems collected from coding challenge platforms. Its difficulty split is central to our evaluation: introductory problems mostly test routine implementation, interview problems require more deliberate algorithm design, and competition problems stress longer reasoning. We use 500 APPS examples in Table~1, including 120 introductory, 250 interview, and 130 competition problems.

\paragraph{HumanEval.}
HumanEval~\cite{chen2021evaluating} contains 164 hand-written Python function synthesis tasks with unit tests. Since its problems are compact, it mainly checks whether DiDPO preserves ordinary function-level generation while improving longer coding trajectories.

\paragraph{MBPP.}
MBPP~\cite{austin2021program} consists of crowd-sourced Python tasks intended for entry-level programmers. It covers basic programming patterns and standard-library usage more broadly than HumanEval. We report the 399-example evaluation subset shown in Table~1.

\paragraph{LiveCodeBench.}
LiveCodeBench~\cite{jain2025livecodebench} continuously collects recent problems from platforms such as LeetCode, AtCoder, and CodeForces. This temporal design makes it useful for testing contamination-resistant code generation. We use its code-generation setting with 150 processed examples.

\paragraph{LeetCode.}
We use LeetCode problems from LeetCodeDataset~\cite{xia2025leetcodedataset}. The dataset curates Python problems with rich metadata, extensive tests, and temporal splits. Our evaluation includes 112 examples, which mainly examine self-contained data-structure and algorithmic reasoning.

\paragraph{USACO.}
USACO~\cite{shi2024can} evaluates problems from USA Computing Olympiad contests. We use the full 307-problem benchmark and report its official tier structure: 123 Bronze, 100 Silver, 63 Gold, and 21 Platinum problems. The tiered results are important because higher levels increasingly require non-obvious algorithms and careful implementation.

\paragraph{OJBench.}
OJBench~\cite{wang2025ojbench} dataset targets competition-level code reasoning with strict online-judge evaluation. We use 159 examples, split into 20 Easy, 53 Medium, and 86 Hard problems.

\paragraph{ICPC.}
ICPC-Eval~\cite{xu2026icpc} focuses on problems selected from ICPC contests. These tasks are typically less template-like than short interview questions and often require both algorithm selection and implementation discipline. We evaluate on 106 ICPC instances.

\begin{table*}[h]
\centering
\caption{Evaluation datasets used in DiDPO experiments. ``Reported subsets'' lists the partitions shown in the main tables.}
\setlength\tabcolsep{16pt} 
\resizebox{2\columnwidth}{!}{
\begin{tabular}{l|l|c|l}
\hline
Benchmark & Type & Samples & Reported subsets \\ \hline
APPS & Mixed programming challenge & 500 & All / Intro / Interview / Competition \\
HumanEval & Function-level synthesis & 164 & All \\
MBPP & Basic Python synthesis & 399 & All \\
LiveCodeBench & Temporal contest code generation & 150 & All \\
LeetCode & Interview-style algorithmic reasoning & 112 & All \\
USACO & Olympiad programming & 307 & All / Bronze / Silver / Gold / Platinum \\
OJBench & Online-judge competition programming & 159 & All / Easy / Medium / Hard \\
ICPC & ICPC-style competitive programming & 106 & All \\ \hline
\end{tabular}}
\end{table*}

\begin{figure*}[t] 
    \centering
    \includegraphics[width=1\linewidth]{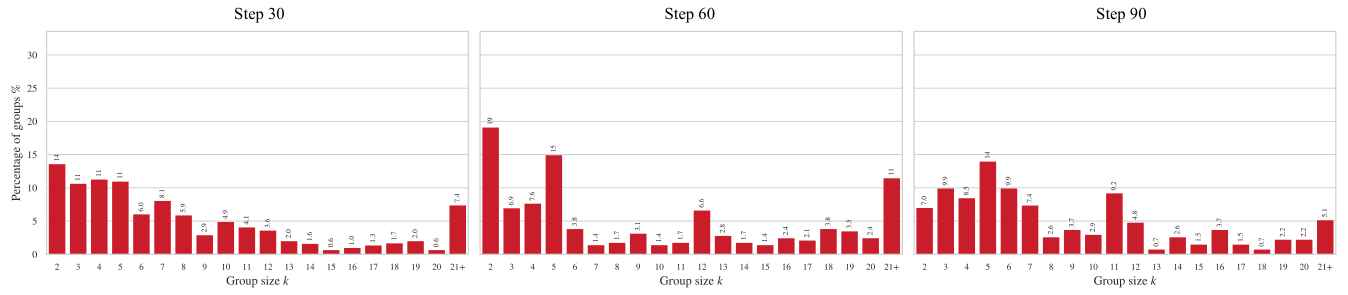} 
    \vspace{-0.7cm}
    \caption{Evolution of group mass over training. We show the evolution dynamics at step 30, 60 and 90 respectively.} 
    \vspace{-0.5cm}
    \label{fig:mass-evolve}
\end{figure*}

\section{Baselines}\label{appx:d}

We compare DiDPO with three groups of baselines. Large models provide reference performance. Prompting and agent-format methods test whether better inference alone is sufficient. RL baselines are used to isolate the effect of our sub-diff credit assignment.

\paragraph{Large model references.}
Kimi-K2.6 is a frontier coding-oriented model from the Kimi series. Qwen3.6-27B is a dense coding model from the Qwen family, with strong instruction following and code-generation ability. GLM-5.2 and GPT-5.5 are general-purpose large models with strong reasoning and programming performance. Skywork-OR1~\cite{he2025skywork} is an open reasoning model trained with reinforcement learning. We report these models as reference systems and do not further train them in our pipeline.

\paragraph{Base backbones and SFT.}
Qwen2.5-Coder-7B~\cite{hui2024qwen25coder} is a code-specialized open model trained on large-scale code data and code-related instruction data. Qwen3.5-4B is a smaller Qwen backbone used to test whether DiDPO remains effective under a more compact model scale. For each backbone, we report the raw model and the SFT variant. The SFT model is trained on trajectories written in the same thought-action format used by DiDPO, so it reflects format alignment before RL optimization.

\paragraph{Code-reasoning baselines.}
Chain-of-Thought (CoT)~\cite{wei2022chain} lets the model reason before writing code. CodeAct~\cite{wang2024codeact} represents actions as executable code and feeds execution observations back to the agent. Self-Planning~\cite{jiang2024self} first plans the solution and then implements it step by step. Together, these baselines test the foundational capabilities of LLMs with training-free harness.

\paragraph{Coding and agentic RL baselines.}
CodeRL+~\cite{jiang2026coderl+} adds execution-semantics alignment to RLVR for code generation. GRPO~\cite{shao2024deepseekmath,guo2025deepseekr1} is the closest episode-level baseline because it also uses group-relative advantages without a critic. GiGPO~\cite{feng2025gigpo} is the strongest agentic credit-assignment baseline, since it builds step-level groups from recurring environment states. DiDPO differs by forming groups inside code diffs, where one coding action may contain several functional sub-diffs.

\paragraph{Controlled comparison.}
For trainable baselines, we keep the task distribution, executable reward, thought-action format, and evaluation protocol aligned with DiDPO whenever applicable. The comparison with GRPO measures the gain from adding sub-diff credit to episode-level advantages. The comparison with GiGPO tests whether state-level grouping is sufficient once the action becomes a structured code diff.

\section{Hyperparameter Settings}\label{appx:e}

The main DiDPO run uses Qwen2.5-Coder-7B initialized from the SFT checkpoint. We train on the multi-turn coding environment with rollout group size 32. Each rollout can take at most eight interaction steps, and each edit is followed by executable feedback. This setting keeps the task long enough to expose multi-step code revision behavior while keeping the training budget controlled.

For DiDPO-specific grouping, we set the diff-level advantage weight to $\lambda=1.2$. We also use a soft matching for sub-diffs, where the sub-diff similarity threshold is $\eta=0.8$, which avoids grouping loosely related edits. The groupability score uses the saturating form in Eq.~10; in implementation we add an extra length scale $s_0=8.0$ and support scale $g_0=8.0$. For tokens covered by multiple candidate groups, we select the group with the largest groupability score. We allow at most 64 anchors per task group.

For the shared policy optimization setting, we use discount factor $\gamma=0.95$, learning rate $1\times10^{-6}$, PPO clipping ratio 0.2, KL coefficient 0.01, and entropy coefficient 0.001. PPO is run for one epoch per batch with mini-batch size 48. Invalid actions receive a small penalty with coefficient 0.01. The training rollout temperature is 1.0, while validation uses temperature 0.6 with sampling enabled. Validation accuracy is averaged over 5 runs.

The maximum prompt length is 8192 and the maximum response length is 4096. We train on one node with 8@H20 GPUs, tensor parallel size 2, and 120 total training steps. Checkpoints are saved every 20 steps, and the main run keeps at most three actor checkpoints.

\section{Evolution of Group Mass Distributions}\label{appx:f}
Figure~\ref{fig:mass-evolve} shows how group mass distributes across training steps. At Step~30, the distribution concentrates at small group sizes, dominated by short fragment-level matches typical of early policy outputs. By Step~60, medium and large groups emerge as the policy produces more diverse edits and the groupability score promotes anchors with broader semantic scope. At Step~90, the distribution becomes markedly more uniform across group sizes, indicating that DiDPO forms credit groups at multiple granularities rather than collapsing to a single scale.

This diversification follows from the design of the groupability score (Eq. 9). Early in training, homogeneous diffs restrict matching to small, highly similar sub-diffs. As the policy explores more varied edits, the groupability score selects anchors that balance semantic scope with group mass, yielding groups at multiple scales. The resulting uniformity connects to the bias-variance decomposition in Section~3.5: larger groups reduce variance through cross-rollout averaging ($O(\sigma_\xi^2/m)$), while well-aligned smaller groups supply low-bias credit signals ($O(L\epsilon)$). The coexistence of both types at convergence explains DiDPO's sustained improvement beyond the plateau of single-scale credit assignment methods.


\end{document}